\DeclareMathOperator{\diag}{diag}
\begin{document}
\newcommand{\REVISION}[1]{#1}
\newcommand{\NEW}[1]{#1}
\urlstyle{tt}
\newcommand\blfootnote[1]{%
  \begingroup
  \renewcommand\thefootnote{}\footnote{#1}%
  \addtocounter{footnote}{-1}%
  \endgroup
}

\newtheorem{myproblem}{Problem}

\newcommand{\sref}[1]{Section~\ref{#1}}
\newcommand{\dref}[1]{Def.~\ref{#1}}
\newcommand{\aref}[1]{Alg.~\ref{#1}}
\newcommand{\apref}[1]{Appendix~\ref{#1}}

\def\dim{\ensuremath{D}}
\def\udim{\ensuremath{m}}
\def\cost{\ensuremath{\mathcal{J}}}
\def\ergmet{\ensuremath{\mathcal{E}}}
\def\fullState{\ensuremath{\bar{\mathbf{x}}}}
\def\regState{\ensuremath{x}} 
\def\regStateI{\ensuremath{x_i}} 
\def\augState{\ensuremath{z}} 
\def\augStateK{\ensuremath{z_k}}
\def\augStateMat{\ensuremath{\mathbf{Z}}}
\def\augSpace{\ensuremath{\mathcal{Z}}}
\def\Lam{\ensuremath{\mathbf{\Lambda}}}
\def\LamK{\ensuremath{\Lambda_k}}
\def\LamK{\ensuremath{\Lambda_k}}
\def\basisK{\ensuremath{F_k}}
\def\basisVec{\ensuremath{\mathbf{F}}}
\def\basisMat{\ensuremath{\mathbf{F}}}
\def\dBasisK{\ensuremath{DF_k}}
\def\dBasisMat{\ensuremath{D\mathbf{F}}}
\def\dynReg{\ensuremath{f}} 
\def\dynRegI{\ensuremath{f_\regStateI}} 
\def\dynAug{\ensuremath{f_{\augState}}} 
\def\dyn{\ensuremath{f}} 
\def\dynAugK{\ensuremath{\dot{\augStateK}}}
\def\solK{\ensuremath{w_k}}
\def\sol{\ensuremath{\mathbf{w}}}
\def\phiVec{\ensuremath{\mathbf{\Phi}}}
\def\vBasis{\ensuremath{\psi}}
\def\vBasisK{\ensuremath{\psi_k}}
\def\diag{\text{diag}}
\newcommand{\Jac}[1]{J(#1)}
\def\uPol{\ensuremath{\ctrl}}
\def\dPol{\ensuremath{\dist}}
\def\UPol{\ensuremath{\ctrl\od}}
\def\DPol{\ensuremath{\dist\od}}
\def\dynCtrl{\ensuremath{\mathbf{B}}}
\def\costCtrl{\ensuremath{\mathbf{R}}}

\def\V{\ensuremath{V}}
\def\Vg{\ensuremath{\V}}

\def\nBasisK{\ensuremath{\tilde{\basisK}}}
\def\zdot{\ensuremath{\dot{z}_k}}
\def\zint{\ensuremath{e_k}}
\newcommand{\Run}[2]{\smallint_{#1}^{#2}{\running(\trajf{\time},\uPol(\time))d\time}}
\def\epK{\ensuremath{\epsilon_k}}
\newcommand{\zInt}[2]{\smallint_{#1}^{#2}{\epK(\time)d\time}}
\newcommand{\zIntp}[2]{\left(\zInt{#1}{#2}\right)}
\newcommand{\zIntSet}[2]{\left\{\zInt{#1}{#2},\forall k\in\kSpace\right\}}
\newcommand{\ZInt}[2]{\Zint|_{#1}^{#2}}
\newcommand{\ZIntp}[2]{(\ZInt{#1}{#2})}
\def\Zint{\ensuremath{e}}
\def\dynE{\ensuremath{\mathbf{f_e}}}
\def\od{(\cdot)}
\def\dynEk{\ensuremath{f_e}}
\def\TO{\ensuremath{{t_0}}}
\def\Ti{\ensuremath{{t_1}}}
\def\Tii{\ensuremath{{t_2}}}
\newcommand{\sumk}[1]{\sum_{k\in\kSpace}{\left(#1\right)}}
\newcommand{\umint}[3]{\min_{\uPol_{[#1,#2]}}{{\max_{\dPol_{[#1,#2]}}}{\left\{#3\right\}}}}
\newcommand{\umintn}[2]{\min_{\uPol_{[#1,#2]}}{{\max_{\dPol_{[#1,#2]}}}}}

\def\od{(\cdot)}
\def\traj{\state}
\newcommand{\trajf}[1]{\state(#1)}
\def\Traj{\state\od}
\def\ztraj{\augState}
\newcommand{\ztrajf}[1]{\augState(#1)}
\def\zTraj{\augState\od}

\newcommand{\figref}[1]{Figure \ref{fig:#1}}
\newcommand{\Partial}[2]{\frac{\partial #1}{\partial #2}}

\def\This{RAnGE}
\newenvironment{problemSetup}
    {\noindent\makebox[\linewidth]{\rule{\columnwidth}{1pt}}}
    {\makebox[\linewidth]{\rule{\columnwidth}{0.5pt}}}

\def\info{\ensuremath{\phi}}
\def\RRR{\ensuremath{\mathbb{R}}}
\def\Re{\ensuremath{\mathbb{R}}}
\def\term{\ensuremath{h}}
\def\running{\ensuremath{g}}
\def\extrarunning{\ensuremath{\running_1}}
\def\ctg{\ensuremath{\V}}
\def\basisGen{\ensuremath{F}}
\def\state{\ensuremath{x}} 
\newcommand{\subtraj}[2]{\ensuremath{\traj_{[#1, #2]}}}
\def\ctrl{\ensuremath{u}} 
\def\ctrlel{\ensuremath{u}}
\def\ctrlSpace{\ensuremath{\mathcal{U}}}
\def\dist{\ensuremath{d}} 
\def\distel{\ensuremath{d}}
\def\distSpace{\ensuremath{\mathcal{D}}}
\def\TF{\ensuremath{{t_f}}}
\def\tRange{\ensuremath{[\TO, \TF]}}
\def\defeq{\coloneqq}
\def\zk{\ensuremath{z_k}}
\def\zs{\ensuremath{z}} 
\def\pos{\ensuremath{x_1}}
\def\vel{\ensuremath{x_2}}
\def\cost{\ensuremath{J}}
\def\pCost{\ensuremath{\hat{J}}}
\def\stateDim{\ensuremath{n}}
\def\ctrlDim{\ensuremath{n_\ctrl}}
\def\distDim{\ensuremath{n_\dist}}
\def\kmax{\ensuremath{N}}
\def\expDim{\ensuremath{v}}
\def\expSpace{\ensuremath{\mathcal{M}}}
\def\expMap{\ensuremath{M}} 
\def\expPt{\ensuremath{m}} 
\def\termWeight{\ensuremath{q_1}}
\def\runningWeight{\ensuremath{q}}
\def\time{\ensuremath{t}}
\def\dummytime{\ensuremath{\tau}}
\def\ck{\ensuremath{c_k}}
\def\dyn{\ensuremath{f}}
\def\barr{\ensuremath{\text{barr}}}
\def\params{\ensuremath{\theta}}
\def\DNN{\ensuremath{V_{\params}}}
\def\Loss{\ensuremath{\ell}}
\def\BoundLoss{\ensuremath{\Loss_{B}}}
\def\AvBoundLoss{\ensuremath{\Bar{\BoundLoss}}}
\def\DiffLoss{\ensuremath{\Loss_{D}}}
\def\AvDiffLoss{\ensuremath{\Bar{\DiffLoss}}}
\def\lossLambda{\ensuremath{\lambda}}
\def\sBuff{\ensuremath{b}}
\def\TrivSolName{trivial "solution"}
\def\trivsol{\ensuremath{{\ctg}_{triv}}}
\def\trivctg{\ensuremath{\ctg'}}
\def\Vinter{\ensuremath{{\ctg}_{inter}}}
\def\smalltime{\ensuremath{\epsilon}}
\def\tDistExp{\ensuremath{k_1}}
\def\tExp{\ensuremath{k_2}}
\def\pHam{\ensuremath{H}}
\def\altErg{\ensuremath{\Bar{\Erg}}}

\newcommand{\vmag}[1]{\lVert #1 \rVert}
\def\Erg{\ensuremath{\mathcal{E}}}
\def\xa{\ensuremath{\Bar{\state}}} 
\def\fa{\ensuremath{\Bar{\dyn}}}
\def\tp{^\intercal} 
\def\es{\ensuremath{\mathcal{X}}} 
\def\kSpace{\ensuremath{\mathcal{K}}} 
\def\esa{\ensuremath{\overline{\es}}} 
\def\Nat{\{0,1,...,N-1\}}
\def\Rpos{\RRR_{\geq 0}}

\def\uDisc{\mathbf{\ctrl}}
\def\dDisc{\mathbf{\dist}}
\def\xDisc{\mathbf{\state}}
\def\zDisc{\mathbf{\augState}}
\def\dt{\Delta \time}
\def\N{T}
\def\niter{$N$}
\mainmatter              
\title{
RAnGE: Reachability Analysis\newline for Guaranteed Ergodicity}
\titlerunning{Reachability for Ergodicity}  
%
\author{Henry Berger \and Ian Abraham}
\authorrunning{Berger and Abraham} 
%
%
\institute{Department of Mechanical Engineering\\Department of Computer Science\\
Yale University, New Haven CT 06511, USA\\
\email{\{henry.berger,ian.abraham\}@yale.edu}}
\maketitle              
\begin{abstract}
    This paper investigates performance guarantees on coverage-based ergodic exploration methods in environments containing disturbances. 
    Ergodic exploration methods generate trajectories for autonomous robots such that time spent in each area of the exploration space is proportional to the utility of exploring in the area. 
    We find that it is possible to use techniques from reachability analysis to solve for optimal controllers that guarantee ergodic coverage and are robust against disturbances.
    We formulate ergodic search as a differential game between the controller optimizing for ergodicity and an external disturbance, and 
    we derive the reachability equations for ergodic search using an extended-state Bolza-form transform of the ergodic problem.
    Contributions include the computation of a continuous value function for the ergodic exploration problem and the derivation of a controller that provides guarantees for coverage under disturbances. 
    Our approach leverages neural-network-based methods to solve the reachability equations; we also construct a robust model-predictive controller for comparison.
    Simulated and experimental results demonstrate the efficacy of our approach for generating robust ergodic trajectories for search and exploration on a 1D  system with an external disturbance force.
    
\end{abstract}

\section{Introduction}

    \par Autonomous 
    mobile robots have been used for a number of search and exploration applications, including search and rescue \cite{tomic_urban_uav_sar,schedl_forest_uav_sar}, mapping~\cite{nuske_river_mapping,qin_mapping}, and extraterrestrial exploration~\cite{serna_mars}. One challenge is that many real-world applications of autonomous search occur in areas with hazards that impede search efforts. Hazards for autonomous mobile robots can be broadly categorized into two classes: static obstacles, such as walls or objects, which do not vary with time; and dynamic disturbances, such as wind and moving obstacles, which can vary over time. It is advantageous to find control solutions that provide guarantees on effective coverage, even in the presence of hazards.
    \blfootnote{For our code, see \url{https://github.com/ialab-yale/RAnGE}.}
    \par Recent advances in ergodic coverage-based exploration algorithms have demonstrated their efficacy for generating trajectories that provide coverage of a search area \cite{dressel_optimality_2018}. Ergodic exploration methods optimize trajectories such that over time, a robot's trajectory spends time in each area of a search space proportional to the information to be gained in the area \cite{mathew_metrics_2011}, providing sparser coverage in low-information areas and more meticulous coverage in high-information areas.
    However, current ergodic controllers have no formal performance guarantees in the presence of disturbances. 
    Therefore, this paper investigates whether it is possible to provide performance guarantees on coverage-based ergodic control methods under dynamic disturbances. 
    
    \begin{figure}[t]
         \includegraphics[width=\linewidth]{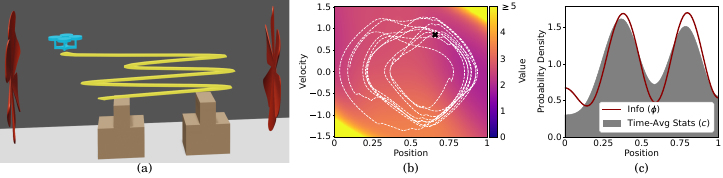}
            \caption{\textbf{Overview of \This.} An agent explores an area with distributed information, subject to a disturbance (a), using a precomputed value function (b) to achieve comprehensive ergodic coverage (c). Note that in (b), higher values denote less desirable states.}
            \label{fig:Blender Render}
            \hfill
    \end{figure}

    \par Methods in reachability analysis offer a path toward providing such performance guarantees on ergodic control problems. Reachability analysis is a method for computing control policies with performance guarantees in wors\REVISION{t}-case disturbance scenarios~\cite{lygeros_reachability_2004,bansal_hamilton-jacobi_2017,chen_decomposition_2018}. 
    Unfortunately, the prerequisite problem structure for using reachability analysis is not present in the canonical ergodic control problem formulation.
    In this paper, we investigate suitable formulations to construct reachability-based controllers for ergodic search that guarantee ergodic coverage of an area in the presence of dynamic external disturbances. 
    
    \par Our approach structures the ergodic control problem as a differential game between control and external disturbance force, and we employ a Bolza-form augmented-state transform of the ergodic metric that has also been used in other ergodicity-related work~\cite{mathew_metrics_2011,de_la_torre_ergodic_2016,dong2023}.
    We show that the problem transform is equivalent to the original ergodic control problem and demonstrate compatibility with existing Hamilton-Jacobi-Isaacs (HJI) reachability methods~\cite{bansal_hamilton-jacobi_2017}. 
    We take advantage of computational advancements in HJI methods~\cite{bansal_deepreach_2021} to solve controllers capable of generating robust ergodic trajectories, although the limitations of these computational methods restrict the complexity of problems for which our controller can currently be computed.
    In summary, our main contributions are as follows: 
    
    \begin{itemize}
        \item Derivation of Hamilton-Jacobi-Isaacs reachability conditions for the problem of ergodic exploration;
        \item Computation of a robust ergodic controller under dynamic disturbances; 
        \item Demonstration of the effectiveness of robust reachability-based ergodic controllers for 1D coverage on a double-integrator system (illustrated in \figref{Blender Render}); and
        \item Derivation of a robust ergodic model-predictive controller (reMPC) for ergodic exploration in the presence of disturbance.
    \end{itemize}
    
    \par This paper is structured as follows: \sref{Related Work} outlines the related work, \sref{Preliminaries} provides an overview of preliminary background information on ergodic exploration and reachability analysis, \sref{Main} derives the main contribution of this work, \sref{Results} demonstrates performance results of the proposed method, and \sref{Conclusion} provides a discussion and conclusion. The appendices provide implementation details, proofs, and an additional derivation.
    
\section{Related Work} \label{Related Work}
\subsection{Reachability Analysis}

    \par Robots operating in the real world are often subject to disturbances, so it is desirable to have guarantees of performance for robotic systems in worst-case conditions. Reachability analysis is a verification technique for robotic systems under such performance scenarios. \REVISION{For a bounded, time-varying disturbance,} these analysis methods can determine the worst-case disturbance and provide a lower bound on performance in all possible conditions. Reachability has already been used for a variety of autonomous robotics applications, including control and obstacle avoidance for drones \cite{chen_decomposition_2018,shao_reachability-based_2021}, collision avoidance for airplanes \cite{bansal_deepreach_2021,julian_guaranteeing_2019}, and safe navigation for autonomous vehicles~\cite{bansal_deepreach_2021,shao_reachability-based_2021}.
    
    \par Reachability analysis requires finding the viscosity solution to the Hamilton-Jacobi-Isaacs (HJI) partial differential equation \cite{lygeros_reachability_2004,evans_differential_1983}; for most applications, this cannot be done analytically and must be done numerically. \REVISION{For any complex problem where an analytical solution is not feasible, the numerical solution is only an approximation of the value function. In some cases, it is possible to guarantee that all inaccuracies are conservative, which maintains the performance guarantees~\cite{chen_decomposition_2018,jiang2017usingneuralnetworkscompute}. In other cases, the reachability-based controllers cannot quite provide the formal guarantees promised by reachability analysis, but as the accuracy of the numerical solution improves, a violation of the guarantees becomes less likely.}
    
    \par Much research on reachability analysis has focused on low\-/dimensional systems. \REVISION{Additionally, due to the difficulty of learning the value function, reachability is often used modularly to provide safety guarantees (e.g., collision avoidance), without providing performance guarantees relating to the robot's main objective~\cite{herbert_fastrack_2017,majumdar_funnels}.
    In this work, we generate approximate solutions to the HJI problem for the case of ergodic exploration, a high-dimensional problem where reachability is used to guarantee performance, not merely safety.}
    \paragraph{The Curse of Dimensionality.} The HJI equation is often solved through discretizing the state space on a grid. However, the computational cost of this approach scales exponentially with the problem state dimensionality, and as a result, grid-based solutions to the HJI equation are limited to low\-/dimensional problems. State decomposition methods \cite{chen_decomposition_2018,he_decomposition} can mitigate this scaling, but many high-dimensional problems are still difficult to solve on a grid. Furthermore, decomposition methods compute individual level sets of the value function, not a complete, differentiable value function.
    \par Recent advances using neural networks for solving differential equations \cite{sitzmann_implicit_2020} have provided new methods for solving the reachability equations \cite{jiang2017usingneuralnetworkscompute,nakamura2021,bansal_deepreach_2021}. For a neural network, the computational time required to find a satisfactory solution depends on the complexity of the solution, rather than strictly on the dimensionality of the state space. Higher-dimensional problems will often have more complex value functions and therefore take longer to learn, but the scaling is better than exponential~\cite{bansal_deepreach_2021}.
    Neural-network-based solvers have demonstrated solutions of the reachability equations for 10D problems with disturbance~\cite{bansal_deepreach_2021} and 30D problems without disturbance~\cite{nakamura2021}.

\subsection{Ergodic Exploration}

    \par In this paper, we apply reachability to a quintessentially trajectory-based optimization problem: coverage-based exploration. In general, the goal of exploration is to maximize coverage of a search area, and for problems involving a single mobile robot, coverage is inherently a property of an entire trajectory, not of an instantaneous state.
    
    \par For discretized exploration problems, a variety of methods already exist to provide formal guarantees on performance. In discrete time and space, exploration can be represented as a traveling salesperson problem, enabling the application of fast, locally optimal algorithms~\cite{lin_tsp_approximate} or slower, formally guaranteed globally optimal algorithms~\cite{miller_tsp_exact}. Voronoi methods can be used to guarantee maximal coverage of a continuous exploration area by a set of agent locations~\cite{cortes_voronoi_coverage,chen_voronoi}; these locations can be treated as the static locations of a swarm of agents or as the discretized trajectory of a single agent, but Voronoi methods cannot generate continuous trajectories. The motions of most robotic systems are continuous in time and space, so discrete-time and discrete-space methods necessarily introduce simplifications, which can lead to non-optimal results. 
    
    \par Ergodic exploration is a commonly-used technique for continuous\hspace{0pt}-time, continuous\hspace{0pt}-space exploration. Ergodic controllers seek to minimize the difference between the (expected) spatial distribution of information and the time-averaged spatial distribution of an agent's trajectory \cite{mathew_metrics_2011}. Dressel and Kochenderfer \cite{dressel_optimality_2018} proved that ergodic exploration is an optimal strategy for gathering information if, as is often the case in real-world applications, the amount of information in a location is depleted as the agent spends time in that location.

    \par Ergodic exploration has been combined with formally guaranteed static obstacle avoidance, using methods such as vector fields~\cite{salman_multi-agent_2017} and control barrier functions~\cite{lerch2023}. In this work, we introduce formal performance guarantees with time-varying disturbances into ergodic exploration, broadening the set of situations in which ergodic search can safely and effectively be used.

\section{Problem Setup and Definitions} \label{Preliminaries}

    \par We consider a general system with state 
    $\state\in\es\subseteq\RRR^\stateDim$,
    control input $\ctrl\in\ctrlSpace\subseteq\RRR^{\ctrlDim}$,
    disturbance input $\dist\in\distSpace\subseteq\RRR^{\distDim}$, and (possibly nonlinear) dynamics
    \begin{equation}
        \dot{\state}(\time)=
        \dynReg(\state(\time),\ctrl(\time),\dist(\time)).
        \label{Disturbance Dynamics}
    \end{equation}
    In this work, we consider the system over a finite time interval $t \in [\TO, \TF]$. The control input is generated by a control policy $\UPol:\es\times\tRange\to\ctrlSpace$, also called the \textit{controller}; the disturbance input is generated by a disturbance policy $\DPol:\es\times\tRange\times\ctrlSpace\to\distSpace$, also called the \textit{disturbance}.

    \par We assume that the disturbance has an instantaneous information advantage,  meaning that the disturbance can respond instantaneously to the controller, although it cannot anticipate the controller's future actions \cite{bansal_hamilton-jacobi_2017}. More concretely, at any time \time, the control is a function of \state\ and \time\ only, whereas the disturbance can be a function of \state, \time, and \ctrl.

\subsection{Ergodic Exploration}

    \par Ergodicity is a measure of the effectiveness of a trajectory in covering distributed information in a known bounded space. 
    We consider exploration of a \expDim-dimensional space $
    \expSpace=[0,L_0]\times[0,L_1]\times \cdots [0,L_{v-1}]\subset \RRR^\expDim$, where $L_i$ are bounds on the exploration space. We define a function $\expMap :\es\to\expSpace$ that maps state $\state\in\es$  to a point in the exploration space $\expPt\in\expSpace$.\footnote{\REVISION{Often, the exploration task only involves a subset of the agent's state (e.g., pose), though the state includes additional variables (e.g., velocity). The mapping \expMap\ between \es\ and \expSpace\ accounts for this distinction; for example, $\expMap$ could be a selection matrix that selects only the pose components of the state.}}
    \begin{definition}[Ergodicity]
    Given a function $\info:\expSpace\to\Re_{>0}$, an ergodic trajectory $\Traj:\tRange\to\es$ is one that spends time in each region of \expSpace\ in proportion to the integral of \info\ over that region, such that
    \begin{equation}
        \lim_{\time\to\infty}{\frac{1}{\time-\TO}\int_{\TO}^{\time}{\basisGen(\expMap\circ\trajf{\dummytime})d\dummytime}}=\int_{\expSpace}{\basisGen(\expPt)\info(\expPt)d\expPt}
    \end{equation}
    for all Lebesgue integrable functions $\basisGen:\expSpace\to\RRR$\emph{~\cite{mathew_metrics_2011}}. 
    \end{definition}

    \par  \REVISION{The definition of \info\ is problem-specific. For search and exploration applications, \info\ is an information density function that represents the probability of discovering information at a location.}
    Following \cite{mathew_metrics_2011}, ergodicity is often measured by a comparison in the Fourier domain between the information density $\info$ and the time-averaged trajectory statistics, which are defined as follows.
    
    \begin{definition}[Time-averaged trajectory statistics]
        The time-averaged location statistics of the trajectory are given by 
        \begin{equation}
            c(\Traj,\time) \defeq \frac{1}{\time-\TO}\int_{\TO}^\time\delta[\expMap(\trajf{\dummytime})]d\dummytime,
        \end{equation}
        where $\delta$ is the Dirac delta function.
    \end{definition}
    
    In order to measure whether a trajectory is ergodic, we define a metric on ergodicity. 
    
    \begin{definition}[Spectral ergodic metric]
        Following \emph{\cite{mathew_metrics_2011}}, the ergodic metric is given by
        \begin{equation}
            \Erg(\Traj, \time)\defeq\sum_{k\in\kSpace}{\Lambda_k\left(c_k(\Traj,\time)-\phi_k\right)^{2}},
            \label{Ergodic Metric Definition}
        \end{equation}
        where the time-average trajectory statistics $c$ and the information density function $\info$ are encoded in the coefficients $c_k$ and $\phi_k$, respectively, calculated using the Fourier transform
        \begin{equation}
            c_k(\Traj,\time)\defeq\frac{1}{\time-\TO}
            \int_{\TO}^{\time}{\basisK(\expMap\circ\trajf{\dummytime})d\dummytime}
            ,\ \ 
            \phi_k\defeq
            \int_\expSpace{\basisK\left(\expPt\right)\info(\expPt)d\expPt},
            \label{Fourier Coefficients}
        \end{equation}
        for modes $k\in\kSpace=\{0,1,2,...,\kmax-1\}^\expDim$.
        Here, the set $\{\Lambda_k\in \mathbb{R}\}$ is a sequence of weighting coefficients defined for each $k\in\kSpace$ as $\Lambda_k\defeq(1+\vmag{k}_2^2)^{-s}$, where $s\defeq(v+1)/2$.
    \end{definition}
    We use the cosine basis function 
        $
        \basisK(\expPt) \defeq \frac{1}{h_k} \prod_{i=0}^{\expDim-1}{\cos{\left(\frac{k_i\pi\expPt_i}{L_i}\right)}},
    $ where $\frac{1}{h_k}$ is a normalizing constant~\cite{mathew_metrics_2011}, but other orthonormal bases would also suffice.
    The ergodic metric measures how far a trajectory is from ergodicity.
    The canonical ergodic trajectory-optimization problem is the minimization of the ergodic metric with respect to a given information metric \info~\cite{mathew_metrics_2011,scott_capturing_2009} (see Problem \ref{def:Disturbed Ergodic} below).
        
\subsection{Reachability Analysis}

    \par Reachability analysis is a technique for solving differential games via the Hamilton-Jacobi-Isaacs (HJI) partial differential equation, which can be used to model safety requirements in control systems. \NEW{We first consider a problem in which,} subject to dynamic constraints, the controller and disturbance seek to minimize and maximize, respectively, a cost function with Bolza form (Problem \ref{def:bolza}).

    \begin{myproblem}[Bolza-form optimal control with disturbance] \label{def:bolza}
        Given a terminal cost $\term:\es\to\Rpos$ and a running cost $\running:\es\times\ctrlSpace\to\Rpos$, for some choice of $\state_0$ and $\TO$, find
        \begin{subequations}
            \label{Bolza Form}
            \begin{equation}
                \uPol^*\od\defeq\arg \min_{\UPol}\max_{\DPol}\left\{\cost(\state_0, \TO, \UPol, \DPol)\right\},
                \label{eq:u_opt}
            \end{equation}
            where the cost function $\cost$ is defined as
            \begin{equation}
                \cost\left(\state_0, \TO,\UPol, \DPol\right)\defeq\term(\trajf{\TF}) 
                +\int_{\TO}^{\TF}{\running(\trajf{\dummytime},\uPol(\dummytime))d\dummytime},
                \label{Bolza Cost}
            \end{equation}
            subject to the following conditions ($\forall\time\in\tRange$, where applicable):
                \begin{align}
                   \ctrl(\time)&=\uPol(\trajf{\time}, \time)
                    \\
                   \dist(\time)&=\dPol(\trajf{\time}, \time, \ctrl(\time))
                    \\
                    \dot{\state}(\time)&=\dynReg(\state(\time), \ctrl(\time), \dist(\time))
                    \\
                    \state(\TO)&=\state_0.
                \end{align}
        \end{subequations}
    \end{myproblem}

    \begin{definition}[Value function] \label{def:cost-to-go}
    The value function $\ctg:\es\times\tRange \to \RRR$, also called the optimal cost-to-go function, is the cost of the remainder of the trajectory, starting at the given time and state, ending at time $\TF$, \REVISION{with optimal control and disturbance.}
        
    \begin{equation}
        \ctg(\state,\time)=\min_{\UPol}\max_{\DPol}\left\{\cost(\state,\time, \UPol, \DPol\right)\}.
        \label{Cost-to-go}
    \end{equation}
    \end{definition}
    \par As proved in Theorem 4.1 of \cite{evans_differential_1983}, the \REVISION{value function} for a Bolza-form optimization problem (Problem \ref{def:bolza}) is a viscosity solution to the Hamilton-Jacobi-Isaacs optimality conditions \cite{evans_differential_1983}.

    \begin{definition}[Hamilton-Jacobi-Isaacs optimality conditions]
        \begin{subequations}
            For the Bolza-form optimization problem (Problem \ref{def:bolza}), the value function $\V:\es\times\tRange\to\RRR$ satisfies the conditions
            \noindent
            \label{HJI Problem}
            \begin{equation}
                -\Partial{\ctg}{t}= \pHam(\state,\time),\ \forall t\in[\TO,\TF],\forall\state\in\es
                \label{HJI Diffeq}
            \end{equation}
            \begin{equation}
                \ctg(\state, \TF)=\term(\state),\ \forall \state\in\es,
                \label{Terminal}
            \end{equation}
            where the Hamiltonian $\pHam$ is defined as
            \begin{equation}
                \pHam(\state,\time) \defeq 
                \min_{\ctrl\in\ctrlSpace}\max_{\dist\in\distSpace}{\Bigr{\{} 
                g(\state, \ctrl) + \left(\Partial{\ctg(\state,\time)}{\state}\right)\tp \dynReg(\state,\ctrl,\dist)
                \Bigl{\}}}.
                        \label{Hamiltonian}
            \end{equation}
        \end{subequations}
    \end{definition}
    
    \par Once value function \ctg\ is known, the HJI equation \eqref{HJI Diffeq} and the definition of the Hamiltonian \eqref{Hamiltonian} entail that the optimal controller $\uPol^*\od$ and worst-case disturbance $\dPol^*\od$ can be calculated directly as
    \begin{subequations}
        \begin{align}        
            \label{Optimal Control}
            \uPol^*(\state,\time) =\arg \min_{\ctrl\in\ctrlSpace}\bigg\{&\max_{\dist\in\distSpace}{\Bigr{\{}
            \hat{H}(\state, \time, \ctrl, \dist)
            \Bigl{\}} }\bigg\} \\
            \label{Optimal Disturbance}
            \dPol^*(\state,\time, \ctrl) =\arg &\max_{\dist\in\distSpace}{\Bigr{\{}
            \hat{H}(\state, \time, \ctrl, \dist)
            \Bigl{\}} },
        \end{align}
    \label{Optimals}
    \end{subequations}where $\hat{H}$ represents the target of the minimax condition in the Hamiltonian $\pHam$ \cite{isaacs1965differential}.
    This optimal controller is one of the main results of reachability analysis, since the controller can be used as a robust policy against worst-case disturbance. We seek to find such an optimal controller for exploration problems, in order to provide sufficient coverage guarantees for exploration with disturbance.

	
\section{Reachability Analysis for Guaranteed Ergodic Exploration} \label{Main}

    We now consider the problem of ergodic control subject to disturbance, as formulated in Problem \ref{def:Disturbed Ergodic}.
        \begin{myproblem}[Ergodic control with disturbance] \label{def:Disturbed Ergodic}
        
            Solve an optimization problem of the same form as the Bolza optimization problem with disturbance (Problem \ref{def:bolza}), but with the modified cost function
        \begin{equation}
            \cost\left(\state_0, \TO, \UPol, \DPol\right)\defeq\Erg(\Traj, \TF)+\term(\trajf{\TF})
            +\int_{\TO}^{\TF}{\running(\trajf{\dummytime},\uPol(\dummytime))d\dummytime}.
            \label{Disturbed Ergodic}
        \end{equation}
    \end{myproblem}

This problem is not strictly compatible with Bolza form, because the ergodic cost depends on the entire trajectory $\Traj$, not just on the terminal state $\trajf{\TF}$. Consequently, we reformulate the ergodic control problem in order to apply reachability analysis. 
    To do so, we first derive an equivalent form of the ergodic metric via an \emph{augmented-state ergodic} variable. 

    \begin{lemma}[Augmented-state ergodic metric formulation~\cite{mathew_metrics_2011,de_la_torre_ergodic_2016}]
        The ergodic metric defined in \eqref{Ergodic Metric Definition} can be equivalently defined as
        \label{metric_lemma}
        \begin{align}
            \Erg(\Traj,\TF)
            &=\sumk{\Lambda_k\left(c_k(\Traj,\time)-\phi_k\right)^{2}}\nonumber \\ 
            &=\frac{1}{t_f-t_0} \sumk{\Lambda_k \augState_k(\TF)^2}
            \label{Equivalent Metrics}
        \end{align} 
         where $z_k(\cdot)$ is the solution to the differential equation 
        \begin{subequations}
            \label{eq:dynAug}
            \begin{align}
                \dot{\augStateK}(\time)&=\basisK(\expMap\circ\trajf{\time})-\phi_k \\
                \augStateK(\TO)&=0\ \forall k\in\kSpace.
            \end{align}
        \end{subequations}

    \end{lemma}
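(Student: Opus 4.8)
The plan is to recognize that the augmented dynamics in \eqref{eq:dynAug} form a pure quadrature: the right-hand side $\basisK(\expMap\circ\trajf{\time})-\phi_k$ is independent of $\augStateK$ itself, so rather than treating \eqref{eq:dynAug} as a genuine dynamical system, I can integrate it directly. First I would integrate from $\TO$ to $\TF$ and apply the initial condition $\augStateK(\TO)=0$ to obtain
\[
    \augStateK(\TF)=\int_{\TO}^{\TF}\left(\basisK(\expMap\circ\trajf{\dummytime})-\phi_k\right)d\dummytime=\int_{\TO}^{\TF}\basisK(\expMap\circ\trajf{\dummytime})\,d\dummytime-\phi_k(\TF-\TO).
\]

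Next I would connect $\augStateK(\TF)$ to the Fourier coefficient $c_k$. By the definition of $c_k$ in \eqref{Fourier Coefficients}, the remaining integral equals $(\TF-\TO)\,c_k(\Traj,\TF)$, which yields the key identity
\[
    \augStateK(\TF)=(\TF-\TO)\left(c_k(\Traj,\TF)-\phi_k\right).
\]
This is the crux of the argument: the terminal value of the augmented state is exactly the un-normalized deviation of the time-averaged trajectory statistics from the target coefficient $\phi_k$, so all of the trajectory dependence of the ergodic metric is captured by the single scalar $\augStateK(\TF)$.

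Finally I would substitute this identity into the metric \eqref{Ergodic Metric Definition}. Squaring the identity expresses each term $\left(c_k(\Traj,\TF)-\phi_k\right)^2$ in terms of $\augStateK(\TF)^2$ and a power of $(\TF-\TO)$, and summing against the weights $\LamK$ then gives the claimed closed form $\sumk{\LamK\,\augStateK(\TF)^2}$ up to the common time-normalization prefactor, which can be pulled outside the sum to recover \eqref{Equivalent Metrics}.

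I do not expect a genuine obstacle here, since the argument is essentially a single integration followed by a substitution. The only step that requires care is the bookkeeping of the $(\TF-\TO)$ factors: $c_k$ carries an explicit $1/(\time-\TO)$ prefactor whereas $\augStateK$ does not, so I would verify that the power of $(\TF-\TO)$ surviving the squaring step is tracked consistently and matches the normalization asserted in \eqref{Equivalent Metrics} before concluding.
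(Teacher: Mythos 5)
Your proposal follows the paper's proof exactly: integrate the quadrature \eqref{eq:dynAug}, use the definition of $c_k$ in \eqref{Fourier Coefficients} to obtain the identity $\augStateK(\TF)=(\TF-\TO)\left(c_k(\Traj,\TF)-\phi_k\right)$, and substitute into the metric --- the same three steps, in the same order, as the proof in \apref{metric_proof}.

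The one substantive remark concerns the verification you defer to the end, because it does not come out the way you expect. Squaring the identity gives $\left(c_k(\Traj,\TF)-\phi_k\right)^2=\augStateK(\TF)^2/(\TF-\TO)^2$, hence
\begin{equation}
    \Erg(\Traj,\TF)=\sum_{k\in\kSpace}\Lambda_k\left(c_k(\Traj,\TF)-\phi_k\right)^2=\frac{1}{(\TF-\TO)^2}\sum_{k\in\kSpace}\Lambda_k\,\augStateK(\TF)^2,
\end{equation}
with prefactor $1/(\TF-\TO)^2$, not the $1/(\TF-\TO)$ asserted in \eqref{Equivalent Metrics}. This is not an error in your argument but an inconsistency in the lemma statement itself, which is inherited by the last line \eqref{End of Proof} of the paper's own proof: that line silently drops a factor of $\TF-\TO$ when substituting the (correctly derived) identity. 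The paper's alternative derivation in \apref{sec:Derivation} uses the $1/(\TF-\TO)^2$ normalization (see \eqref{eq:MetSimple} and \eqref{eq:Running HJI Terminal}), in agreement with your computation. The discrepancy is harmless in the paper's experiments, where $\TF-\TO=1$, and for any fixed horizon it only rescales the metric by a constant, so nothing downstream breaks; but your instinct to track the power of $(\TF-\TO)$ is exactly right, and the check as you describe it would fail against the lemma as stated.
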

    \begin{proof} 
        See \NEW{\apref{metric_proof}}. \qed
    \end{proof}

    This formulation of the ergodic metric \eqref{Equivalent Metrics} can be used to rewrite the ergodic control problem with disturbance (Problem \ref{def:Disturbed Ergodic})  in Bolza form.

    \begin{myproblem}[Bolza-form ergodic control with disturbance] \label{prob:Ergodic Bolza}
        Given a terminal cost $\term:\es\to\Rpos$ and a running cost $\running:\es\times\ctrlSpace\to\Rpos$, for some choice of $\state_0$ and \TO, find
        \begin{subequations}
            \label{Ergodic Bolza Form}
            \begin{equation}
                \uPol^*\od\defeq \arg \min_{\UPol}\max_{\DPol}\left\{\cost(\state_0, \TO, \UPol, \DPol)\right\},
                \label{Ergodic Bolza Minimax}
            \end{equation}
            where the cost function $\cost$ is defined as
            \begin{align}
                \cost\left(\TO,\state_0, \UPol, \DPol\right)=&
                \frac{1}{t_f-t_0} \sumk{\Lambda_k \augStateK(\TF)^2}+\term(\trajf{\TF}) \nonumber \\
                &+\int_{\TO}^{\TF}{\running(\trajf{\dummytime}, \uPol(\dummytime))d\dummytime},
                \label{Ergodic Bolza Cost}
            \end{align}
            subject to the following conditions ($\forall t\in\tRange$ and $\forall k\in\kSpace$, where applicable):
                \begin{alignat}{2}
                    \label{eq:first_traj_cond}
                   \ctrl(\time)&=\uPol(\state(\time),\augState(\time), \time)
                    ,
                   &&\dist(\time)=\dPol(\state(\time), \augState(\time), \time, \ctrl(\time))
                    ,
                    \\
                    \dot{\state}(\time)&=\dynReg(\state(\time), \ctrl(\time), \dist(\time)),
                    &&\state(\TO)=\state_0, \\
                    \dot{\augState}_k(\time)&=\basisK(\expMap\circ\trajf{\time})-\phi_k,
                    \hspace{0.25in}&&\augStateK(\TO)=0.
                    \label{eq:last_traj_cond}
                \end{alignat}
        \end{subequations}
        Note that $\augState\in\augSpace= \RRR^{N^\expDim}$ is a vector of $\augStateK, \forall k\in\kSpace$.
    \end{myproblem}

    \par Problem \ref{prob:Ergodic Bolza} is a canonical Bolza-form problem (Problem \ref{def:bolza}), which enables the application of reachability analysis. 

    \begin{theorem}[Ergodic Hamilton-Jacobi-Isaac PDE] \label{prob:Ergodic HJI}
     \label{thm:Guarantee}
         The solution to the ergodic control problem with disturbance (Problem \ref{prob:Ergodic Bolza}) is given by the value function $\ctg:\es\times\augSpace\times\tRange\to\RRR$, optimal controller $\uPol^*\od:\es\times\augSpace\times\tRange\to\ctrlSpace$, and optimal disturbance $\dPol^*\od:\es\times\augSpace\times\tRange\times\ctrlSpace\to\distSpace$ such that $\ctg$ satisfies
        \begin{subequations}
            \label{HJI Ergodic Problem}
                \begin{align}
                \label{Ergodic HJI Diff}
                -\Partial{\V(\state,\augState,\time)}{\time}&=\min_{\ctrl\in\ctrlSpace}\max_{\dist\in\distSpace}\left\{\hat{\pHam}(\state,\augState,\time, \ctrl,\dist)\right\}, \text{where}\\
                \hat{\pHam}(\state,\augState,\time, \ctrl,\dist)&=\bigg\{
                \running(\regState, \ctrl)+
                \left(\Partial{\V(\state,\augState,\time)}{\regState}\right)\tp
                \dynReg(\regState,\ctrl,\dist)\nonumber \\
                &\phantom{=\bigg\{}+\sumk{\left(\Partial{\V(\state,\augState,\time)}{\augStateK}\right)\tp\left(\basisK(\expMap(\state))-\phi_k
                \right)} 
                \bigg\},
                \label{Ergodic HJI Hamiltonian}
                \end{align}
            with terminal condition
            \begin{equation}
                \ctg(\regState,\augState,\TF)=\frac{1}{\TF-\TO} \sumk{\Lambda_k \augStateK(\TF)^2}+\term(\regState),
                \label{Ergodic HJI Terminal}
            \end{equation}
        \end{subequations}
        and such that $\uPol^*\od$ and $\dPol^*\od$ satisfy
        \begin{subequations}
                \begin{align}
                \uPol^*(\state,\augState,\time)&=\arg\min_{\ctrl\in\ctrlSpace}\max_{\dist\in\distSpace}\left\{\hat{\pHam}(\state,\augState,\time, \ctrl,\dist)\right\},\label{eq:range_control}\\
                \dPol^*(\state,\augState,\time,\ctrl)&=\arg\max_{\dist\in\distSpace}\left\{\hat{\pHam}(\state,\augState,\time, \ctrl,\dist)\right\}.\label{eq:range_disturbance}
            \end{align}
            \label{eq:range_ctrl_dist}
        \end{subequations}
    \end{theorem}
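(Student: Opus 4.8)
The plan is to observe that Problem~\ref{prob:Ergodic Bolza} is already, by construction, an instance of the generic Bolza-form optimal control problem with disturbance (Problem~\ref{def:bolza}), merely posed on the \emph{extended} state space $\es\times\augSpace$. Consequently, rather than re-deriving optimality conditions from scratch, I would invoke the general Hamilton-Jacobi-Isaacs characterization already recalled in \eqref{HJI Diffeq}--\eqref{Hamiltonian}: by Theorem~4.1 of~\cite{evans_differential_1983}, the value function of any canonical Bolza problem is the viscosity solution of the HJI PDE with Hamiltonian~\eqref{Hamiltonian}, and the optimal policies are given by \eqref{Optimals}. The entire task then reduces to (i) verifying that the extended problem genuinely satisfies the hypotheses of that theorem, and (ii) evaluating the generic Hamiltonian for the specific extended dynamics and terminal cost and checking that it collapses to \eqref{Ergodic HJI Hamiltonian}.

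For the identification, I would introduce the extended state $\fullState\defeq(\state,\augState)\in\es\times\augSpace$ together with the stacked dynamics whose first block is $\dynReg(\state,\ctrl,\dist)$ and whose remaining blocks are $\basisK(\expMap(\state))-\phi_k$ for each $k\in\kSpace$, exactly the trajectory constraints \eqref{eq:first_traj_cond}--\eqref{eq:last_traj_cond}. The running cost $\running(\state,\ctrl)$ and the terminal cost $\frac{1}{\TF-\TO}\sumk{\Lambda_k\augStateK(\TF)^2}+\term(\state)$ are already written as functions of $\fullState$, so the cost \eqref{Ergodic Bolza Cost} is literally of Bolza form. I would then confirm that the regularity assumptions needed for the viscosity-solution theory transfer to the extended system: the appended dynamics are built from cosine basis functions composed with $\expMap$ and are therefore smooth and bounded, with bounded derivatives, on the bounded exploration domain $\expSpace$, while the appended terminal term is a continuous quadratic in $\augState$; hence the extended problem inherits Lipschitz dynamics and continuous costs from the original.

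Finally, I would substitute into \eqref{Hamiltonian}. Because $\fullState$ stacks $\state$ and the $\augStateK$, the costate $\Partial{\V}{\fullState}$ decomposes blockwise, and its inner product with the stacked dynamics splits into the term $\left(\Partial{\V}{\state}\right)\tp\dynReg(\state,\ctrl,\dist)$ plus $\sumk{\left(\Partial{\V}{\augStateK}\right)\tp\left(\basisK(\expMap(\state))-\phi_k\right)}$, which together with $\running(\state,\ctrl)$ reproduces \eqref{Ergodic HJI Hamiltonian} verbatim. Feeding this Hamiltonian into \eqref{HJI Diffeq} yields \eqref{Ergodic HJI Diff}; the terminal condition \eqref{Terminal} applied to the Bolza terminal cost yields \eqref{Ergodic HJI Terminal}; and the policies \eqref{eq:range_control}--\eqref{eq:range_disturbance} follow directly from \eqref{Optimals}. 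The one point requiring care is that the disturbance $\dist$ enters solely through $\dynReg$ — the appended dynamics and the running cost are disturbance-free — so the inner maximization over $\dist$ acts only on the $\Partial{\V}{\state}$ term; this is easy to confirm but worth stating explicitly. I expect the main, and essentially the only non-routine, obstacle to be step (i): carefully checking that the extended problem meets the technical hypotheses of the Evans--Souganidis theorem so that its value function is guaranteed to be the viscosity solution of the stated PDE, since everything downstream is a direct specialization.
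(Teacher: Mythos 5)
Your proposal is correct, but it takes a genuinely different route from the paper's own proof. The paper (Appendix~\ref{pr:Guarantee}) gives a direct, self-contained dynamic-programming derivation in the style of Isaacs~\cite{isaacs1965differential} and Galperin~\cite{galperin2008}: it writes the value function with the ergodic cost \eqref{Ergodic Bolza Cost}, evaluates it at $\time=\TF$ to obtain the terminal condition \eqref{Ergodic HJI Terminal}, splits the running-cost integral to obtain the recursive minimax principle of optimality \eqref{eq:HJB Condition}, and then differentiates in time and substitutes the dynamics \eqref{Disturbance Dynamics} and \eqref{eq:dynAug} to obtain the differential condition, with the optimal policies read off from the minimax by inspection. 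You instead treat Problem~\ref{prob:Ergodic Bolza} as a literal instance of Problem~\ref{def:bolza} on the extended state $(\state,\augState)$ and specialize the general HJI characterization \eqref{HJI Diffeq}--\eqref{Hamiltonian} of~\cite{evans_differential_1983}, with a blockwise costate computation collapsing the generic Hamiltonian to \eqref{Ergodic HJI Hamiltonian}; this is precisely the route the paper itself gestures at in the opening line of its proof (``For an alternative proof, consult~\cite{evans_differential_1983}''). Your route buys modularity and arguably more rigor: the viscosity-solution theorem covers points where $\ctg$ fails to be differentiable, whereas the paper's derivation implicitly assumes smoothness when it differentiates \eqref{eq:HJB Condition} and Taylor-expands. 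What it costs is exactly the hypothesis check you flag as step (i): the appended dynamics $\basisK\circ\expMap-\phi_k$ are indeed bounded and Lipschitz since the $\basisK$ are smooth on the compact $\expSpace$, but the quadratic terminal cost is unbounded on $\augSpace$, so you should additionally note that over the finite horizon, with $\augStateK(\TO)=0$ and bounded $\dot{\augStateK}$, the augmented state remains in a compact set, allowing the cost to be truncated outside that set without altering the problem. The paper's derivation, by contrast, is self-contained, exhibits explicitly how the extended-state costate terms arise, and runs parallel to the canonical-formulation derivation in Appendix~\ref{sec:Derivation}; both arguments are valid and reach the same conditions.
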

    \begin{proof}
        (Informal) 
        The proof follows \cite{galperin2008} and Section 4.2 of \cite{isaacs1965differential}. Using the definition of the value function (\dref{def:cost-to-go}) with the ergodic cost function \eqref{Ergodic Bolza Cost}, the value function at the terminal time can be shown to satisfy \eqref{Ergodic HJI Terminal}. We then break the value function into Isaacs' canonical principle of optimality (Eq. 2.7 of~\cite{galperin2008}), which is a minimax variant of the Bellman principle of optimality. From there, following the standard reachability derivation~\cite{isaacs1965differential,galperin2008} yields the remaining conditions for $\ctg,\uPol^*,\dPol^*$.
        A more detailed proof is provided in \NEW{\apref{pr:Guarantee}}. \qed
    \end{proof}
        \NEW{
        We additionally prove in \apref{sec:Derivation} that the canonical and extended-state formulations of the ergodic control problem (Problems \ref{def:Disturbed Ergodic} and \ref{prob:Ergodic Bolza}, respectively) yield the same HJI equation \eqref{HJI Ergodic Problem}. This demonstrates that the presence of extended states in the HJI equation \eqref{HJI Ergodic Problem} is a feature of the problem structure, not an artifact of the choice of problem formulation.
        }
        
        
    \par The optimal controller $\uPol^*$ from Theorem \ref{thm:Guarantee} is robust in the sense that it satisfies the following lemmas, which are direct consequences of \eqref{Ergodic Bolza Minimax}. 
    \begin{remark}[Cost guarantees]
    Applying $\uPol^*\od$ starting at $(\state,\augState,\TO)$ guarantees that despite any possible disturbance $d\in\distSpace$, the trajectory cost will be upper-bounded by  $\ctg(\state,\augState,\TO)$; furthermore, no controller can guarantee an upper bound lower than $\ctg(\state,\augState,\TO)$. 
    \end{remark}
    \begin{remark}[Reachable coverage set] 
    If $\running(\state,\time)=0$ and $\term(\state)=0$, then the set $\{(\state,\augState)\in\es\times\augSpace\ \rvert\  \ctg(\state,\augState,\TO)\leq \epsilon\}$ is precisely the set of starting states from which the optimal controller can guarantee achieving an ergodic metric of at most $\epsilon$.
    \end{remark}
    \par We solve the HJI equation using a deep neural network (DNN), following~\cite{bansal_deepreach_2021}, and we call the resulting controller \This. Implementation details are provided in \NEW{\apref{sec:Implementation}}.

    \subsection{Practical Considerations Regarding Scalability and Robustness}

    The HJI PDE \eqref{HJI Ergodic Problem} can only be solved numerically, which raises two concerns. Firstly, the solved value function is only an approximation, and errors in the approximation could invalidate the guarantees of Theorem \ref{thm:Guarantee}. This is a common issue for reachability methods; one common solution is to make the value function overly conservative, which is possible even when using neural networks~\cite{jiang2017usingneuralnetworkscompute}, but this compromises the accuracy of the value function. We do not provide such conservative guarantees here, but as numerical HJI solvers improve, any approximation errors and their consequences should diminish.

    Secondly, scalability is a major limitation for HJI solvers, and the extra dimensionality from the augmented state \augState\ limits the scope of ergodic problems that can be solved. The number of added dimensions is $\kmax^\expDim$, which scales polynomially with respect to the resolution \kmax\ of the basis functions and exponentially with respect to the dimensionality $\expDim$ of the exploration space $\expSpace$. The curse of dimensionality is common to all reachability applications, and developing new solvers with better scalability is an active area of research.
    
    Alternative formulations of reachability for ergodic search may be possible, but we do not anticipate that any such alternative formulation could avoid the issue of increased dimensionality. The ergodic metric depends on the history of the trajectory, so an optimal controller must have a way to respond to information about the history of the trajectory up to the current time. 
    Whether this information is stored in the augmented state variable $\augState$ (as in this paper, as well as~\cite{mathew_metrics_2011,de_la_torre_ergodic_2016,dong2023}) or some other way, it will necessarily constitute an additional input to the value function, thus increasing the dimensionality of the problem. 
    \NEW{Indeed, we show in \apref{sec:Derivation} that extended state variables emerge from the application of reachachability analysis to ergodic exploration, even when using an ergodic formulation without extended states (Problem \ref{def:Disturbed Ergodic}).}
    
\subsection{An Alternative Robust Ergodic Model-Predictive Controller} \label{sec:reMPC}
    \par We additionally develop a robust ergodic model-predictive controller, which we call reMPC. This controller uses first-order gradient descent to solve a discretized version of the minimax condition of the Bolza-form ergodic optimization problem~\eqref{Ergodic Bolza Form}:\footnote{
    Here, the dynamics are discretized using the explicit Euler method, but any numerical integration method could be used.}
    \begin{myproblem}[Discretized Bolza-form ergodic control with disturbance] 
    Given a start state $(\state\in\es,\augState\in\augSpace)$, for some timestep $\Delta t\in\RRR_{>0}$ and time horizon $\N\in\mathbb{N}$, solve for the optimal control $\ctrl^*\in\ctrlSpace^\N$ such that 
    \begin{align}
        \uDisc^*=\arg\min_{\uDisc\in\ctrlSpace^\N}\max_{\dDisc\in\distSpace^\N}\bigg\{
        \sum_{\dummytime=0}^{\N}{\running(\xDisc_\dummytime,\uDisc_\dummytime)\Delta \time}
        +\frac{1}{\N\Delta \time}\sumk{\Lambda_k(\zDisc_\N)_k^2}+\term(\xDisc_\N)\bigg\},
        \label{ReMPC Problem}
    \end{align}
    subject to
    \begin{subequations}
         \begin{align}
             \xDisc_\dummytime &= \xDisc_{\dummytime-1}+\dynReg(\xDisc_{\dummytime-1},\uDisc_{\dummytime-1},\dDisc_{\dummytime-1})\Delta t,\ \forall \dummytime \in [1,2,\dots,\N] \\
             (\zDisc_\dummytime)_k &= (\zDisc_{\dummytime-1})_k+(\basisK(\expMap(\xDisc_{\dummytime-1})-\phi_k)\Delta t,\ \forall \dummytime \in [1,2,\dots,\N], \forall k\in\kSpace\\
             (\xDisc_0,\zDisc_0)&=(\state,\augState).
         \end{align}
    \end{subequations}
    \end{myproblem}
    \par The complete reMPC algorithm is given as \aref{alg:reMPC}. The first-order gradient descent method (lines \ref{lst:line:grad_desc_start}-\ref{lst:line:grad_desc_end}) produces only local optima, so reMPC has no performance guarantees. However, given a long enough time horizon, if reMPC finds solutions close to the global optimum, then its performance should be similar to that of \This. 
    \vspace{-15pt}
    \begin{algorithm}[h]
        \KwIn{Starting state $(\state_0\in\es, \augState_0\in\augSpace)$}
        \KwIn{Dynamics $\dyn:\es\times\ctrlSpace\times\distSpace\to\mathcal{T}_\es$}
        \KwIn{Basis functions $\{\basisK:\expSpace\to\RRR,\forall k\in\kSpace\}$, information $\{\phi_k\in\RRR,\forall k\in\kSpace\}$}
        \KwIn{Running cost $\running:\es\times\ctrlSpace\to\Rpos$, terminal cost $\term:\es\to\Rpos$}
        \DontPrintSemicolon
        \SetKwBlock{Begin}{function}{end function}
        \Begin(\textsc{reMPC}{(}$\state_0\in\es, \augState_0\in\augSpace${)})
        {
            $\uDisc, \zDisc \gets \mathbf{0}$\;
            \While{$\mathrm{controller\ is\ running}$}
            {
                $\uDisc_{0,\dots,\N-2}\gets\uDisc_{1,\dots,\N-1}$,\ $\uDisc_{\N-1}\gets \mathbf{0}$\;
                $\dDisc_{0,\dots,\N-2}\gets\dDisc_{1,\dots,\N-1}$,\ $\dDisc_{\N-1}\gets \mathbf{0}$\;
                \For{$\niter\ \mathrm{iterations}$\label{lst:line:grad_desc_start}}
                {
                    $\uDisc \gets \uDisc - \lambda_\ctrl \cdot \left(\partial\text{\sc{PredRollout}}/\partial\uDisc\right)$\;
                    $\dDisc \gets \text{clip}\left(\dDisc + \lambda_\dist \cdot \left(\partial\text{\sc{PredRollout}}/\partial\dDisc\right), \distSpace\right)$ \Comment{Constrain $\dist\in\distSpace$}\label{lst:line:grad_desc_end}\;
                }
            Apply $\uDisc_0$
            }
        }
        \Begin($\textsc{PredRollout} {(}\state_0\in\es,\augState_0\in\augSpace,\uDisc\in\ctrlSpace^\N, \dDisc\in\distSpace^\N{)}$)
        {
            $\xDisc_0 \gets \state_0,\ \zDisc_0 \gets \augState_0$\;
            \For{$\dummytime \mathrm{\ in\ }[1, 2, \dots, \N]$ }
            {
                $\xDisc_{\dummytime} \gets \xDisc_{\dummytime-1}+\dynReg(\xDisc_{\dummytime-1}, \uDisc_{\dummytime-1}, \dDisc_{\dummytime-1})\Delta t$\;
                $(\zDisc_{\dummytime})_k \gets (\zDisc_{\dummytime-1})_k+(\basisK(\expMap(\xDisc_{\dummytime-1})-\phi_k)\Delta t,\  \forall k\in\kSpace$\;
            }
            \Return $\sum_{\dummytime=0}^{\N}{\running(\xDisc_\dummytime,\uDisc_\dummytime)\Delta \time}
        +\frac{1}{\N\Delta \time}\sumk{\Lambda_k(\zDisc_\N)_k^2}+\term(\xDisc_\N)$\;
        }
        \caption{Robust Ergodic Model-Predictive Control (reMPC)}\label{alg:reMPC}
    \end{algorithm}
    \vspace{-35pt}
\section{Results} \label{Results}

    \par In this section, we demonstrate the \This\ controller for the double-integrator system moving along one axis. 
    
    \REVISION{
    Due to the computational complexities of the value function, we found that training the value function was only possible for a one-second time horizon. 
    The results in this section were achieved by running the controller with a receding one-second time horizon, always inputting $\time\approx\TO$ into the controller.\footnote{\REVISION{Because $\time=\TO$ was on the boundary of the time interval over which the DNN was trained, a slightly larger time ($\time=0.02s$) in the interior of the time interval was found to provide better behavior.}} This modification violates the preconditions for the guarantees of Theorem \ref{thm:Guarantee} to apply, but if the HJI equation could be solved over a longer time horizon, the guarantees would apply over that time horizon.}

    We generate trajectories from \This\ and compare against reMPC (Section \ref{sec:reMPC}). We also compare against a standard model-predictive controller (MPC), a type of controller that does not explicitly account for disturbance but has often been shown to perform well under external disturbance~\cite{salman_multi-agent_2017,miller_ergodic_2016}. We implement a conventional MPC that minimizes the Bolza-form ergodic cost function \eqref{Ergodic Bolza Cost} without considering disturbance.     
    
    The goal for this section is to answer the following questions:
    \begin{enumerate}
        \item[Q1:] Can \This\ generate robust ergodic trajectories in the presence of a disturbance?
        \item[Q2:] How does the value function evolve over the course of a trajectory?
        \item[Q3:] How does the performance of \This\ compare to the performance of MPC and reMPC?
        \item[Q4:] Regarding its effects on controller performances, how does the worst-case disturbance generated by \This\ compare to other types of disturbance?
    \end{enumerate}
    \par The following subsections provide an overview of the problem setup and answers to the questions stated above\REVISION{; each subsection \textbf{A}$n$ in \ref{Answers} below answers the corresponding question \textbf{Q}$n$ above.}
    
\subsection{Problem Setup: One-Axis Double Integrator} \label{Setup}

    \par We use a single-axis double-integrator dynamical system to study the proposed method. The system has  state $\state=[\pos\ \vel]\tp$, control $\ctrl=\left[\ctrlel\right]$, disturbance $\dist=\left[\distel\right]$, and dynamics $\dot{\state}=[\vel, u+d]\tp$.
    Here, we subject the system to the control and disturbance constraints $\ctrl\in\left[-\ctrl_{\text{max}},\ctrl_{\text{max}}\right]\subset \mathbb{R},\ \dist\in\left[-\dist_{\text{max}},\dist_{\text{max}}\right]\subset \mathbb{R}$. For all results in this paper, we use $\dist_{\text{max}}=0.4\ctrl_{\text{max}}$; performance was found to decline significantly for larger disturbance magnitudes.
    
    \par We set $N=6$, leading to an 8-dimensional system: one dimension for time, two dimensions for the double integrator state, and five dimensions for the augmented state.\footnote{$N=6$ only necessitates five augmented state variables, because $z_{k=\textbf{0}}(t)=0$ for all $t$, so there is no need to compute and track this term.}
    Note that while the plant dynamics are linear, the dynamics of the augmented state variables \eqref{eq:dynAug} are nonlinear, so the augmented system as a whole is nonlinear.
    \par The ergodic problem setup was that of \eqref{HJI Ergodic Problem}, with running cost
    \begin{equation}   
    \running(\state, \augState,\ctrl)\defeq\runningWeight\sumk{\Lambda_k\augStateK^2}+\ctrl\tp R \ctrl + \barr(\state),
        \label{Running Cost}
    \end{equation}
    where \runningWeight\ is a nonnegative scalar weighting factor, $R\in\Re^{1\times 1}$ is a positive semi-definite matrix representing a weight on the control input, and $\barr(\regState)$ is a barrier function that penalizes the trajectory for leaving the search area bounds on position. We introduce the augmented state \augState\ into the running cost with the running ergodic cost term $\runningWeight\sumk{\Lambda_k\augStateK^2}$, which was found to improve convergence by helping the solver avoid a local minimum where $\ctg=0$ everywhere except close to $\time=\TF$. Other than the ergodic metric, there is no additional terminal cost $\term(\regState)$.
    
    \begin{figure*}[t!]
        \centering
        \includegraphics[width=\linewidth]{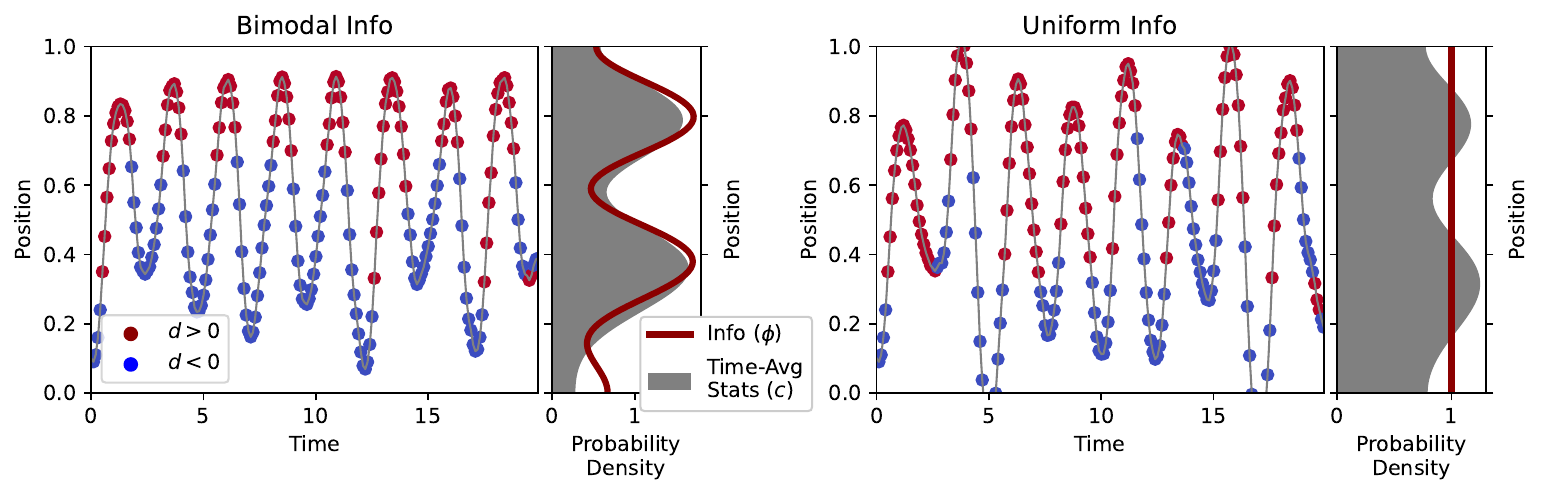}
        \caption{\textbf{Trajectories generated by \This\ in simulation, with worst-case disturbance, for bimodal and uniform information distributions.} For each pair of plots, the left plot shows the trajectory and the direction of the disturbance, and the right plot shows the coverage achieved by the trajectory. }
        \label{fig:Single Trajectory}
    \end{figure*}
\subsection{Simulated Results} \label{Answers}

    \subsubsection{A1: Robust ergodic trajectories.} 
    A robust controller must produce ergodic trajectories even with worst-case disturbance, and with the controller from \This, it is possible to calculate this worst-case disturbance \eqref{eq:range_disturbance}. In this particular problem setup, the control and disturbance are both affine, and they enter the dynamics in exactly the same way; 
    it follows that the worst-case disturbance is always in the opposite direction of the optimal control. Thus, the worst-case disturbance can be calculated as 
    \begin{equation}
        \dist^*=(-\mathrm{sign}(\ctrl^*))\dist_{\mathrm{max}}.
        \label{Noise}
    \end{equation}
    
    \begin{figure*}[t!]
        \centering
        \includegraphics[width=\linewidth]{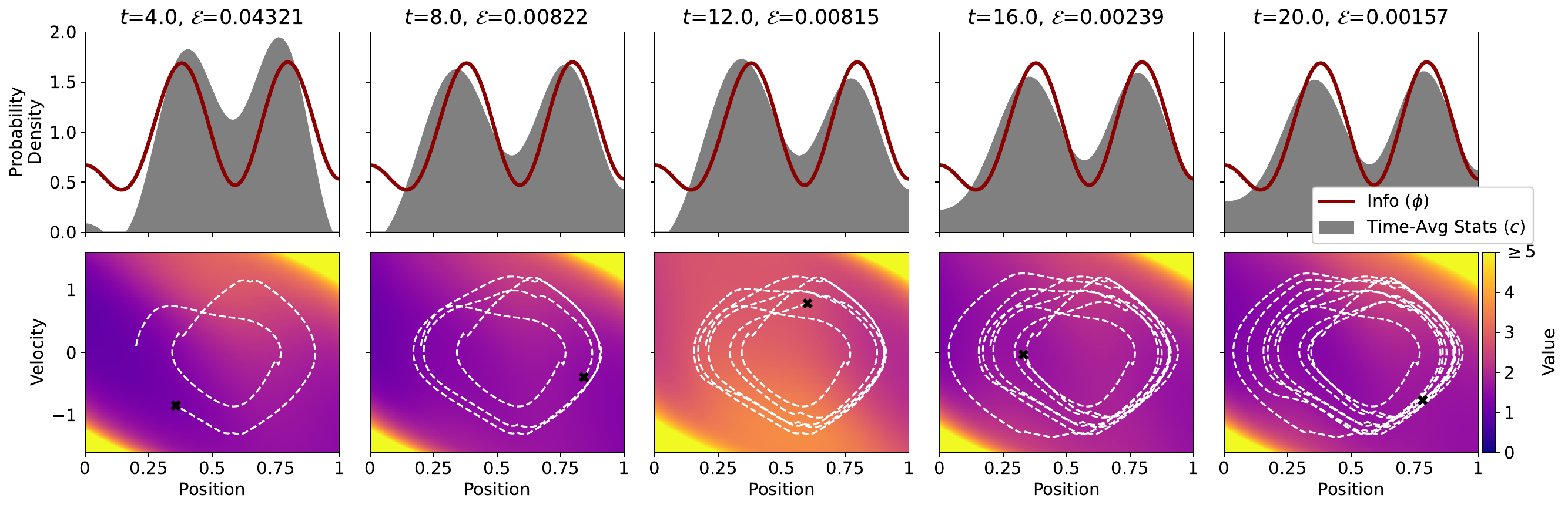}
        \caption{\textbf{Time-Evolution of a Trajectory and the Value Function.} Top row: the time-averaged spatial distribution and the information distribution. As time increases, the time-averaged trajectory statistics approach the information distribution, which is the goal of ergodic exploration. Bottom: evolution of the trajectory through phase space, superimposed on a cross-section of the value function with respect to position and velocity, where $z_k$ are fixed to their value at that instant in the trajectory. }
        \label{fig:Trajectory with Value}
    \end{figure*}
    \par We demonstrate that the controller can generate robust ergodic trajectories with worst-case disturbance, as shown in \figref{Single Trajectory}. More trajectory results are summarized in \figref{Comparisons}, discussed below.

    \subsubsection{A2: Value function evolution.} \figref{Trajectory with Value} shows the evolution of a trajectory, as well as a cross-section of the value function at times during the trajectory. Qualitatively, properties of the value function can be related to the problem setup. Recall that the controller seeks states where the value function is lowest. At the top right and lower left corners, the value function is high, indicating that the controller should avoid these states; this is logical, because the agent is about to leave the exploration bounds and incur a boundary cost. 
    Elsewhere, the value function is lowest in states where the agent is at or moving toward positions it has not adequately explored, and highest in states where the agent is at or moving toward positions it has already over-explored.

    \subsubsection{A3: Comparison of \This, MPC and reMPC.} The performance of the three controllers for two information distributions and four disturbance types is summarized in \figref{Comparisons}. There is considerable variation in the performances within each trial, due in part to the limitations of the controllers' short time horizons, but some trends emerge.

        \begin{figure*}[t!]
        \centering
        \includegraphics[width=\linewidth]{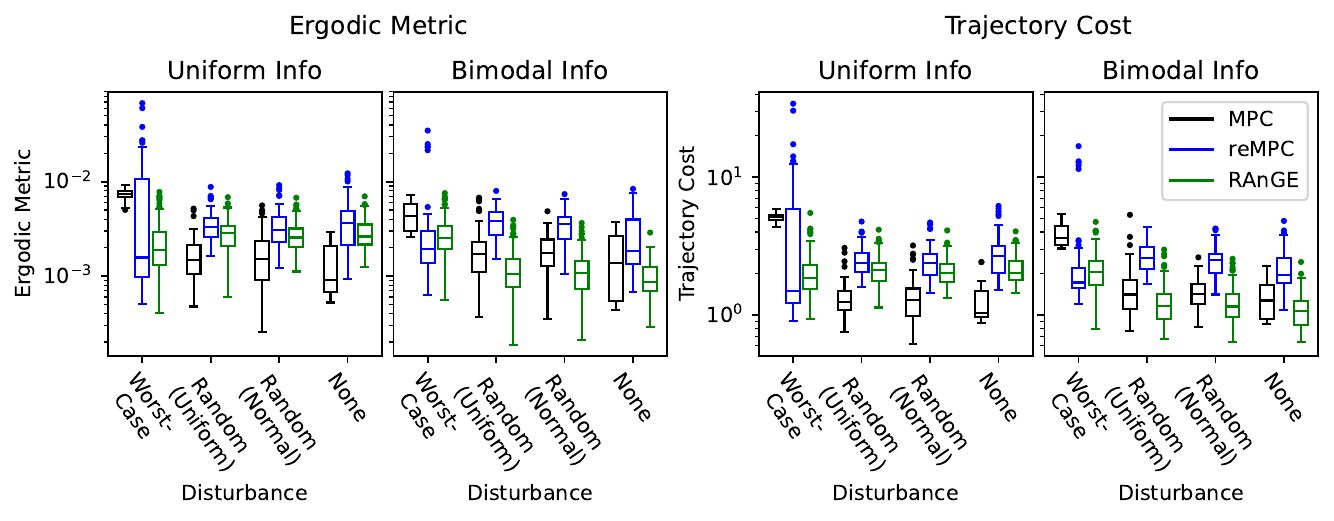}
        \caption{\textbf{Comparison of MPC, reMPC, and \This}.
        The performance measures are the ergodic metric \eqref{Equivalent Metrics} and cost function \eqref{Ergodic Bolza Cost}. Note that the ergodic metric is not normalized; see \figref{Trajectory with Value} for an intuition of the scale of \ergmet.
        The MPC and reMPC results aggregate 64 runs with varying initial conditions; the \This\ results aggregate from 64 runs each with two controllers trained with different random seeds. 
        The worst-case disturbance came from \This\ according to \eqref{eq:range_disturbance}, the uniform disturbance was sampled from $[-\dist_{\mathrm{max}},\dist_{\mathrm{max}}]$, and the normal disturbance was sampled from a Gaussian with $\mu=0,\sigma=\dist_{\mathrm{max}}/2$. All trials lasted for 20s, and all controllers had a 1s time horizon, so the running cost was divided by 20 when computing the trajectory cost $\cost$, to preserve the relative weighting of the running and terminal costs.}
        
        \label{fig:Comparisons}
    \end{figure*}
    \par As expected, the worst performances by \This\ are better than the worst performances by either of the other two methods. This is the primary theoretical advantage of \This: it should have the highest floor for performance of any controller. By contrast, the best performance by the other controllers rival or exceed the performance of \This, particularly with non-worst-case noise. This is consistent with the formulation of \This, which is designed to optimize its worst-case performance, not its best-case performance. In situations where average performance with non-worst-case disturbance is most important, the MPC could be a better choice.
    \par Interestingly, the reMPC controller performs its best---both absolutely and relative to the other methods---with worst-case disturbance. This is because the reMPC controller optimizes its controls under the assumption of a near-worst-case disturbance, so it is not optimized for non-worst-case disturbances. Consequently, reMPC outperforms MPC with worst-case disturbance but underperforms it otherwise.
    \vspace{-15pt}\subsubsection{A4: Comparison of Disturbances} Just as the \This\ controller optimizes its worst-case performance, the \This-generated disturbance is optimally designed to impede best-case performance by any controller. Across both information distributions, the best performance by any controller with worst-case disturbance is worse than the best performance by any controller with each of the other types of noise.
    \par The effects of the two types of random disturbance on controller performance are very similar to the effect of no disturbance at all. This is unsurprising, because the random disturbances average to zero and thus tend to cancel out.

    \begin{figure}[t]
         \centering 
         \includegraphics[width=\linewidth]{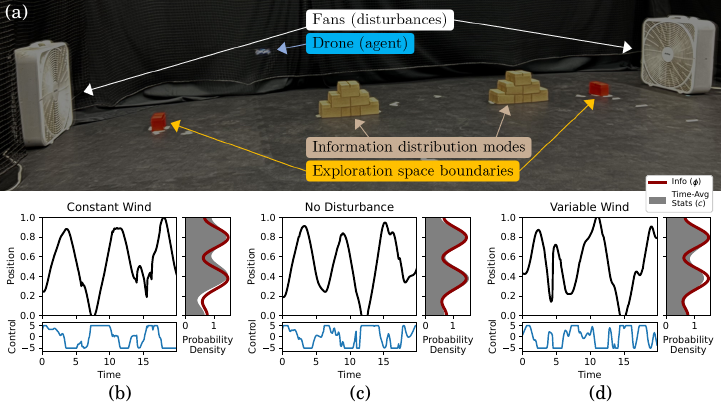}
    	\caption{\textbf{Experimental setup and results.} (a) We used a Bitcraze Crazyflie 2.1 quadrotor drone, and two Lighthouse cameras (not pictured) to assist with position and velocity measurement~\cite{crazyswarm}. The information distribution was a hard-coded bimodal distribution, and the blocks show the modes and bounds for visualization purposes only. Bottom row: trajectories recorded from the drone, with the fans (b) turned on at constant low speed, (c) turned off, and (d) turned on at varying speeds from low to high over the course of the trajectory.}
    	\label{fig:Experiment}
    \end{figure}

    \subsection{Experimental Results}
    \par We tested the controller on a Bitcraze Crazyflie 2.1 quadrotor drone~\cite{crazyswarm} with fans for disturbance; the experimental setup is shown in \figref{Experiment}a. The drone was tested with no disturbance; with the fans at constant low speed; and with the fans varying between low, medium, and high speeds. Trajectories from these tests are shown in subfigures b-d of \figref{Experiment}. We were unable to implement a worst-case disturbance in the experiment because the worst-case disturbance must react instantaneously to the trajectory of the drone.
    
    \par In all cases, the drone executed ergodic trajectories, as shown with the reconstructed time-averaged statistics in \figref{Experiment}. This demonstrates that the \This\ controller is robust to disturbance and can be implemented on physical systems. The trajectory was most ergodic in the case of no disturbance, which is consistent with the results of \figref{Comparisons}.

\section{Discussion} \label{Conclusion}

    \par In conclusion, we showed that it is possible to guarantee coverage using reachability analysis on ergodic search problems despite dynamic disturbances. We demonstrated a Bolza-form formulation for ergodic exploration using an augmented state that allows us to derive the ergodic HJI conditions of optimality. To compute solutions to the ergodic HJI conditions, we leveraged recent deep learning methods to approximate the value function. The controller was demonstrated to provide robust ergodic trajectories in the presence of dynamic disturbances, both in simulation and in experiments.
    \par One limitation of this work is that due to the value function complexity, the HJI equation could not be solved for more complex problems (e.g., time horizons much greater than one second and  exploration spaces with multiple dimensions). 
    This limitation is a result of numerical implementation and not the fundamental theory. With advances in HJI solving techniques, future work could extend these results to longer time horizons, enabling longer guarantees on ergodic coverage trajectories. Furthermore, we seek to extend this work to higher, multidimensional search spaces and introduce dynamic obstacles in the formulation. This paper provides the theoretical foundation for performing reachability analysis on ergodic exploration, and improved solving techniques will broaden its applicability.
\begin{appendix}

\section{Implementation for Solving the HJI Equation} \label{sec:Implementation}
    \par Solving the Hamilton-Jacobi-Isaacs problem \eqref{HJI Ergodic Problem} for the value function \ctg\ is not feasible analytically, but it can be done numerically. Following \cite{bansal_deepreach_2021}, we employ a deep neural network (DNN) to compute a numerical solution for \ctg, with the sine function as the activation function. The sine activation function provides smooth gradients of the output of the neural network \cite{sitzmann_implicit_2020}, which is necessary for computing the controller and disturbance \eqref{eq:range_ctrl_dist}. The input to the DNN is the full state $(\regState, \augState,\time)$, and the output is trained to be the scalar value function \ctg. 
    
    \par The training loss \Loss\ is defined as $\Loss(\params) =\BoundLoss(\params)+\lossLambda \DiffLoss(\params)$, where
    \begin{subequations}
    \begin{align}
            \BoundLoss(\params)&\defeq\frac{1}{|S_{\TF}|} \sum_{(\regState, \augState,\TF)\in S_{\TF}}{\vmag{\DNN(\regState, \augState,\TF)-\frac{1}{\TF-\TO} \sumk{\Lambda_k \augStateK^2}-\term(\regState)}},\\
            \DiffLoss(\params)&\defeq\frac{1}{|S|}\sum_{(\regState,\augState,\time)\in S}{\Biggr{\lVert}\Partial{\DNN}{\time}+\min_{\ctrl\in\ctrlSpace}\max_{\dist\in\distSpace}
            \biggr{\{}
                \hat{\pHam}(\regState,\augState,\time, \ctrl,\dist)   
            \biggl{\}}\Biggl{\rVert}},
        \label{Loss}
    \end{align}
    \end{subequations}
    where \DNN\ is the output of the DNN with parameters \params, $S$ is a sampled set of state data points of the form $(\regState,\augState, \time)$, and $S_\TF\subseteq S$ is the subset of $S$ of all state data points with $\time=\TF$, and $\hat{\pHam}$ is defined in \eqref{Ergodic HJI Hamiltonian}. 
    
    \par In the loss function, \BoundLoss\ is the loss associated with the terminal boundary condition \eqref{Ergodic HJI Terminal}, so it is only considered for state data points where $\time=\TF$. By contrast, \DiffLoss\ is the loss associated with the HJI differential equation \eqref{Ergodic HJI Diff}, so it is considered for all state points sampled. The tuning parameter \lossLambda\ is used to adjust the weighting of the terminal and differential loss components.

    \par The training of the DNN was performed as in~\cite{bansal_deepreach_2021}, such that the terminal condition \eqref{Ergodic HJI Terminal} was learned first, and then the solution was learned expanding backwards from $\time=\TF$ to $\time=\TO$, following the differential equation \eqref{Ergodic HJI Diff}. 
    The DNN was trained for a total of $1.5\times10^6$ iterations, which took an average of 247 minutes on an NVIDIA GeForce RTX 3080 GPU.
    

\section{Proofs}
\subsection{Proof of Lemma \ref{metric_lemma}} \label{metric_proof}
\begin{proof}
Integrating the differential equation for $\augStateK$ \eqref{eq:dynAug}, recognizing that $\phi_k$ is constant, produces
\begin{equation}
    \augStateK(\Traj,\TF)=
    \int_{\TO}^{\TF}{\basisK(\expMap\circ\regState(\time))d\time}-(\TF-\TO)\phi_k.
\end{equation}
The definition of $c_k$ \eqref{Fourier Coefficients} can be used to simplify to 
 \begin{equation}
 \augStateK(\traj,\TF)=
    (\TF-\TO)\left(c_k(\Traj,\TF)-\phi_k\right).
 \end{equation}
 Finally, substituting this into \eqref{Equivalent Metrics} yields
 \begin{equation}
 \frac{1}{\TF-\TO} \sumk{\Lambda_k \augStateK^2}
    = \sum_{k\in\kSpace}{\Lambda_k\left(c_k(\Traj,\TF)-\phi_k\right)^2}.
    \label{End of Proof}
 \end{equation}
This completes the proof. \qed
 \end{proof}
\subsection{Proof of Theorem \ref{thm:Guarantee}} \label{pr:Guarantee}
\begin{proof}
    For an alternative proof, consult~\cite{evans_differential_1983}. In \eqref{Cost-to-go}, repeated here, we define the value function as the optimal cost of the remainder of the trajectory, given by
    \begin{align}
        \ctg(\state,\augState,\Ti)&=\min_{\UPol}\max_{\DPol}\left\{\cost(\state,\Ti, \UPol, \DPol\right)\}.
    \end{align}
    Using the ergodic cost function \eqref{Ergodic Bolza Cost}, this becomes
    \begin{align}
        \ctg(\state,\augState,\Ti)=&\min_{\UPol}\max_{\DPol}\bigg\{
        \int_{\Ti}^{\TF}{\running(\state(\dummytime),\ctrl(\dummytime))d\dummytime} \nonumber\\
        &\phantom{\min_{\UPol}\max_{\DPol}\bigg\{}+\frac{1}{\TF-\TO}\sumk{\Lambda_k\augStateK(\TF)^2}+\term(\state(\TF))\bigg\}.
        \label{eq:Expanded V}
    \end{align}
    Note that in this and following equations, $\state(\time),\augState(\time),\ctrl(\time)$ are governed by the constraints \eqref{eq:first_traj_cond}-\eqref{eq:last_traj_cond}.
        Substituting $\Ti=\TF$ into the definition of $\ctg$ \eqref{eq:Expanded V} directly yields the HJI boundary condition
        \begin{equation}
        \ctg(\state,\augState,\TF)=\frac{1}{\TF-\TO}\sumk{\Lambda_k\augStateK(\TF)^2}+\term(\state(\TF)).
        \end{equation}
        
    \noindent Next, we define $\time\in(\Ti,\TF)$, then split the integral in \eqref{eq:Expanded V}, yielding
        
    \begin{align}
        \ctg(\state,\augState,\Ti)=&\min_{\UPol}\max_{\DPol}\bigg\{
        \int_{\Ti}^{\time}{\running(\state(\dummytime),\ctrl(\dummytime))d\dummytime}+
        \int_{\time}^{\TF}{\running(\state(\dummytime),\ctrl(\dummytime))d\dummytime} \nonumber \\
        &\phantom{\min_{\UPol}\max_{\DPol}\bigg\{}+\frac{1}{\TF-\TO}\sumk{\Lambda_k\augStateK(\TF)^2}+\term(\state(\TF))\bigg\}\\
        =&\min_{\UPol}\max_{\DPol}\bigg\{
        \int_{\Ti}^{\time}{\running(\state(\dummytime),\ctrl(\dummytime))d\dummytime}+
        \ctg(\state(\time),\augState(\time),\time)\bigg\}
        \label{eq:HJB Condition}
    \end{align}
    Equation \eqref{eq:HJB Condition} is the Isaacs recursive principle of optimality (c.f. Eq. 2.7 of \cite{galperin2008}).
    Taking the derivative with respect to $\time$ yields
    \begin{align}
        0=&\min_{\uPol(\time)}\max_{\dPol(\time)}\left\{
        \running(\state(\time),\ctrl(\time)) +\Partial{\ctg}{\state}\frac{d\traj(\time)}{d\time} +\Partial{\ctg}{\augState}\frac{d\augState(\time)}{d\time}+ \Partial{\ctg}{\time}\right\} \\
        -\Partial{\ctg}{\time}=&\min_{\uPol(\time)}\max_{\dPol(\time)}\left\{
        \running(\state(\time),\ctrl(\time)) +\Partial{\ctg}{\state}\frac{d\traj(\time)}{d\time} +\Partial{\ctg}{\augState}\frac{d\augState(\time)}{d\time}\right\}
    \end{align}
    Since all terms are evaluated at time $\time$, we can replace $\state(\time)$ with $\state$, and likewise for $\augState,\ctrl,\dist$.
    We can now substitute the equations for $\frac{d\state}{d\time}$ \eqref{Disturbance Dynamics} and $\frac{d\augState}{d\time}$ \eqref{eq:dynAug}, yielding the HJI differential condition
        \begin{align}
                -\Partial{\V(\state,\augState,\time)}{\time}&=\min_{\ctrl\in\ctrlSpace}\max_{\dist\in\distSpace}\bigg\{
                \running(\regState, \ctrl)+
                \Partial{\V(\state,\augState,\time)}{\regState}
                \dynReg(\regState,\ctrl,\dist)\nonumber \\
                &\phantom{\min_{\ctrl\in\ctrlSpace}\max_{\dist\in\distSpace}\bigg\{abc}+\sumk{\Partial{\V(\state,\augState,\time)}{\augStateK}\left(\basisK(\expMap\circ\state)-\phi_k
                \right)} 
                \bigg\}.
                \label{eq:Ergodic HJI Diff proof}
        \end{align}
        By inspection, it is clear that for any $(\state,\augState,\time)$, the optimal control \ctrl\ and disturbance \dist\ are precisely those that satisfy the minimax condition in \eqref{eq:Ergodic HJI Diff proof}.
        \qed
    \end{proof}

\section{Derivation of the Ergodic HJI Equation }\label{sec:Derivation}
This section presents an alternative derivation of the HJI equation for ergodic exploration, presented above as \eqref{HJI Ergodic Problem}. Notably, although we begin from the canonical formulation of the ergodic metric, which does not include extended state variables, the resulting HJI equation will nevertheless contain extended state variables.
\subsection{Problem Setup}
We start with the simple cost function
\begin{align}
	\cost(\Traj,\UPol) \defeq &\Run{\TO}{\TF}+\term(\state(\TF))+
    \ergmet(\Traj,\TF)\\
    =& \Run{\TO}{\TF}+\term(\state(\TF))\nonumber \\
    &+\sum_k {\LamK\left(\frac{1}{\TF-\TO}\int_\TO^\TF\basisK(\expMap\circ\traj(\time))d\time-\phi_k \right)^2}
\end{align}
First, we bring the $\phi_k$ into the integral of the ergodic metric, yielding
\begin{align}
    \cost(\Traj,\UPol)=&\Run{\TO}{\TF}+\term(\state(\TF)) \nonumber \\
    &+\sum_k {\LamK\left(\frac{1}{\TF-\TO}\int_\TO^\TF\left(\basisK(\expMap\circ\traj(\time))-\phi_k\right)d\time \right)^2}
    \label{eq:Unexpanded Cost}
\end{align}
For notational ease, we define $\epK(\time)\defeq \basisK(\expMap\circ\traj(\time))-\phi_k$.
The ergodic metric is the sum of a single squared term for each value of $k$, so it can be rewritten as 
\begin{equation}
\ergmet(\traj, \TF) = \sum_{k}\left(\frac{\Lambda_k}{(\TF-\TO)^2}\zIntp{\TO}{\TF}^2\right),
\label{eq:MetSimple}
\end{equation}
\subsection{Solving for the HJI Equation}
Define times $\Ti,\Tii$ such that $\TO<\Ti<\Tii<\TF$.
We expand the expression for the cost function \eqref{eq:Unexpanded Cost} by substituting in the definition of the ergodic metric \eqref{eq:MetSimple} and recognizing that $\zInt{\TO}{\TF}=\zInt{\TO}{\Ti}+\zInt{\Ti}{\Tii}+\zInt{\Tii}{\TF}$, yielding

\begin{align}
    \cost(\Traj,\UPol) 
    =&
    \Run{\TO}{\TF}+\term(\state(\TF))\nonumber\\
    &+\sum_{k\in\kSpace}\Bigg(\frac{\Lambda_k}{(\TF-\TO)^2}\bigg(\zInt{\TO}{\Ti}+\zInt{\Ti}{\Tii}+\zInt{\Tii}{\TF}\bigg)^2\Bigg).
\end{align}
Expanding yields
\begin{align}
    \cost(\Traj,\UPol)=&\Run{\TO}{\Ti}+\Run{\Ti}{\Tii}+\Run{\Tii}{\TF}+\term(\state(\TF))\nonumber \\ 
    &+\sum_{k\in\kSpace}\Bigg(\frac{\Lambda_k}{(\TF-\TO)^2}\Bigg[ \zIntp{\TO}{\Ti}^2 
    +\zIntp{\Ti}{\Tii}^2  \nonumber\\
    &\hspace{1in}+2\zIntp{\TO}{\Ti}\zIntp{\Ti}{\Tii}  \nonumber\\
    &\hspace{1in}+2\left[\zInt{\TO}{\Ti}+\zInt{\Ti}{\Tii}\right]\zIntp{\Tii}{\TF}  \nonumber\\
    &\hspace{1in}+\zIntp{\Tii}{\TF}^2\Bigg]\Bigg)\Bigg\}.
    \label{Expanded}
\end{align}
We define the cost-to-go $\ctg(\subtraj{\TO}{\dummytime},\dummytime)$ as the optimal value of the remaining running cost and the ergodic metric of entire trajectory, where the portion of the trajectory prior to $\time=\dummytime$ is held fixed. 
We use the subscript ${\boxed{\cdot}}_{[\time_a, \time_b]}$ to denote the portion of the trajectory, control, or disturbance function limited to the time range $t\in[\time_a, \time_b]$. 
Applying this definition of $\ctg$ to \eqref{Expanded} produces
\begin{align}
    \label{eq:VT2}
    \ctg(\subtraj{\TO}{\Tii},\Tii)=& \zIntp{\TO}{\Tii}^2\nonumber \\
    &+\umintn{\Tii}{\TF}\Bigg\{\Run{\Tii}{\TF}+\term(\state(\TF))\nonumber \\ 
    &\hspace{0.5in}+\sum_{k\in\kSpace}\Bigg(\frac{\Lambda_k}{(\TF-\TO)^2}\Bigg[
    2\zIntp{\TO}{\Tii}\zIntp{\Tii}{\TF}  \nonumber\\
    &\hspace{1.5in}+\zIntp{\Tii}{\TF}^2\Bigg]\Bigg)\Bigg\}
\end{align}
and
\begin{align}
    \label{eq:Fully expanded Value}
    \ctg(\subtraj{\TO}{\Ti},\Ti)=& \zIntp{\TO}{\Ti}^2\nonumber \\
    &+\umintn{\Ti}{\TF}\Bigg\{\Run{\Ti}{\Tii}+\Run{\Tii}{\TF}+\term(\state(\TF))\nonumber \\ 
    &\hspace{0.5in}+\sum_{k\in\kSpace}\Bigg(\frac{\Lambda_k}{(\TF-\TO)^2}\Bigg[\zIntp{\Ti}{\Tii}^2  \nonumber\\
    &\hspace{1.25in}+2\zIntp{\TO}{\Ti}\zIntp{\Ti}{\Tii}  \nonumber\\
    &\hspace{1.25in}+2\left[\zInt{\TO}{\Ti}+\zInt{\Ti}{\Tii}\right]\zIntp{\Tii}{\TF}  \nonumber\\
    &\hspace{1.25in}+\zIntp{\Tii}{\TF}^2\Bigg]\Bigg)\Bigg\}\\
    =& \umintn{\Ti}{\Tii}\Bigg\{
    \Run{\Ti}{\Tii}+
    \zIntp{\TO}{\Tii}^2\nonumber \\
    &\hspace{0.25in}+\umintn{\Tii}{\TF}\Bigg\{\Run{\Tii}{\TF}+\term(\state(\TF))\nonumber \\ 
    &\hspace{0.5in}+\sum_{k\in\kSpace}\Bigg(\frac{\Lambda_k}{(\TF-\TO)^2}\Bigg[
    2\zIntp{\TO}{\Tii}\zIntp{\Tii}{\TF}  \nonumber\\
    &\hspace{1.5in}+\zIntp{\Tii}{\TF}^2\Bigg]\Bigg)\Bigg\}\Bigg\}\\
    =&\umintn{\Ti}{\Tii}\Big\{
    \Run{\Ti}{\Tii}+\ctg(\subtraj{\TO}{\Tii}, \Tii)
    \Big\}.
    \label{eq:Ergodic Bellman}
\end{align}

The final result \eqref{eq:Ergodic Bellman} is the recursive Isaacs principle of optimality for a problem with running cost. Note that aside from the different notations for the arguments of $\ctg$, this is equivalent to \eqref{eq:HJB Condition}, from the proof of Theorem \ref{thm:Guarantee} above.

As written in \eqref{eq:Fully expanded Value}, the cost-to-go $\ctg$ at time $\dummytime$ depends on the entire trajectory up to that time, $\subtraj{\TO}{\dummytime}$. However, inspection shows that $\subtraj{\TO}{\dummytime}$ is only used in two ways:
\begin{itemize}
    \item The entire history $\subtraj{\TO}{\dummytime}$ is used in the calculation of $\zInt{\TO}{\dummytime}, \forall k\in\kSpace$.
    \item The state $\trajf{\dummytime}$ is necessary for rolling out the dynamics at times greater than $\dummytime$.
\end{itemize}
Thus, $\ctg$ at time $\dummytime$ can be written as $\ctg(\trajf{\dummytime},\zIntSet{\TO}{\dummytime},\dummytime)$.
Taking the limit of \eqref{eq:Ergodic Bellman} as $\Delta t \defeq \Tii-\Ti\to 0$ results in
\begin{align}
    \ctg&\left(\trajf{\Ti},\zIntSet{\TO}{\Ti},\Ti\right) \\
    &=\min_{\ctrl\in\ctrlSpace}\max_{\dist\in\distSpace}\Bigg\{\running(\trajf{\Ti},\uPol(\Ti)) \Delta t \nonumber \\ 
&\hspace{0.25in}+\ctg\left(\trajf{\Ti}+f(\state,\ctrl,\dist)\Delta t,\left\{\zInt{\TO}{\Ti}+\epK(\Ti)\Delta t,\forall k\in\kSpace\right\},\Ti+\Delta t\right)\Bigg\}. \nonumber
\end{align}
We use the Taylor expansion of the value function, resulting in
\begin{align}
    \ctg&\left(\trajf{\Ti},\zIntSet{\TO}{\Ti},\Ti\right) \\
    &=\min_{\ctrl\in\ctrlSpace}\max_{\dist\in\distSpace}\Bigg\{\running(\trajf{\Ti},\uPol(\Ti)) \Delta t 
+\ctg\left(\trajf{\Ti},\zIntSet{\TO}{\Ti},\Ti\right) \nonumber \\
&\hspace{0.25in}+\Partial{\ctg}{\state}\dynReg(\state,\ctrl,\dist)\Delta \time
    +\sumk{\Partial{\ctg}{\left(\zInt{\TO}{\time}\right)}\epK}\Delta \time+\Partial{\ctg}{\time}\Delta \time + o(\Delta \time^2)
\Bigg\}, \nonumber
\end{align}
where $o(\Delta \time^2)$ is a term that approaches 0 much faster than $\Delta \time$ as $\Delta \time \to 0$. Dividing by $\Delta \time$ and then taking the limit as $\Delta t \to 0$ yields
\begin{equation}
    0=\min_{\ctrl\in\ctrlSpace}\max_{\dist\in\distSpace}\Bigg\{\running(\state,\ctrl) 
    +\Partial{\ctg}{\state}\dynReg(\state,\ctrl,\dist)
    +\sumk{\Partial{\ctg}{\left(\zInt{\TO}{\time}\right)}\epK}+\Partial{\ctg}{\time}\Bigg\}.
\end{equation}
We recognize now that $\zInt{\TO}{\dummytime}$ is precisely equivalent to $\augStateK(\dummytime)$. We refrained from making this substitution earlier in order to emphasize that this derivation does not depend on any properties of $\augStateK$ proved elsewhere in the paper. Making this substitution, expanding $\epK$, and rearranging produces the differential condition
\begin{align}
    -\Partial{\ctg}{\time}=\min_{\ctrl\in\ctrlSpace}\max_{\dist\in\distSpace}\Bigg\{&\running(\state,\ctrl)+\Partial{\ctg}{\state}\dynReg(\state,\ctrl,\dist) \nonumber \\
    &
    +\sumk{\Partial{\ctg}{\augStateK}\left(\basisK(\expMap(\state))-\phi_k\right)}\Bigg\}.
    \label{eq:Running HJI Differential}
\end{align}
Finally, evaluating the non-recursive formulation of the value function \eqref{eq:VT2} at $\time=\TF$ yields the terminal condition
\begin{equation}
    \ctg(\state,\augState,\TF)=\sumk{\frac{\Lambda_k}{(\TF-\TO)^2}\augStateK^2}+\term(\state),\ \forall \state\in\es, \forall \augState\in\augSpace
    \label{eq:Running HJI Terminal}
\end{equation}
Taken together, \eqref{eq:Running HJI Differential} and \eqref{eq:Running HJI Terminal} constitute a PDE that defines the value function, and they are identical to the HJI PDE \eqref{HJI Ergodic Problem} derived from the extended-state formulation of the ergodic control problem (Problem \ref{prob:Ergodic Bolza}).

\end{appendix}

\bibliography{bibliography}
\bibliographystyle{ieeetr}

\end{document}